\algrenewcommand\algorithmicrequire{\textbf{Input:}}
\newcommand{\CommentState}[1]{\Statex\hspace{\algorithmicindent}{\color{blue}// #1}}
\newtheorem{remark}{Remark}
\newtheorem{proposition}{Proposition}
\newtheorem{assumption}{Assumption}
\newtheorem{lemma}{Lemma}
\newcommand{\ubar}[1]{\underaccent{\bar}{#1}}
\DeclareMathOperator*{\argmin}{arg\,min}
\DeclareMathOperator{\proj}{proj}
\newcommand{\norm}[1]{\left\lVert#1\right\rVert}
\newcommand{\ev}[1]{\mathbb{E}\left[#1\right]}
\newcommand{\N}{\mathbb{N}}
\newcommand{\R}{\mathbb{R}}
\newcommand{\C}{\mathcal{C}}
\newcommand{\e}{\mathbold{e}}
\newcommand{\x}{\mathbold{x}}
\newcommand{\y}{\mathbold{y}}
\newcommand{\1}{\pmb{1}}
\def\lmin{{\ubar{\lambda}}}
\def\lmax{{\bar{\lambda}}}
\title{\LARGE \bf
Online Distributed Learning with Quantized Finite-Time Coordination}
\author{Nicola Bastianello, Apostolos I. Rikos, Karl H. Johansson % <-this % stops a space
\thanks{This work was partially supported by the European Union’s Horizon 2020 research and innovation programme under grant agreement No. 101070162, and partially by Swedish Research Council Distinguished Professor Grant 2017-01078 Knut and Alice Wallenberg Foundation Wallenberg Scholar Grant.}% <-this % stops a space
\thanks{N. Bastianello and K. H. Johansson are with the School of Electrical Engineering and Computer Science and Digital Futures, KTH Royal Institute of Technology, Sweden,
        {\tt\small \{ nicolba | kallej\}@kth.se}.}%
\thanks{Apostolos~I.~Rikos is with the Department of Electrical and Computer Engineering, Division of Systems Engineering, Boston University, Boston, MA 02215, US. E-mail: {\tt arikos@bu.edu}.}
}
\begin{document}

\maketitle
\thispagestyle{plain}
\pagestyle{plain}

%%%%%%%%%%%%%%%%%%%%%%%%%%%%%%%%%%%%%%%%%%%%%%%%%%%%%%%%%%%%%%%%%%%%%%%%%%%%%%%%
\begin{abstract}
In this paper we consider online distributed learning problems. 
Online distributed learning refers to the process of training learning models on distributed data sources. 
In our setting a set of agents need to cooperatively train a learning model from streaming data.
Differently from federated learning, the proposed approach does not rely on a central server but only on peer-to-peer communications among the agents. This approach is often used in scenarios where data cannot be moved to a centralized location due to privacy, security, or cost reasons. In order to overcome the absence of a central server, we propose a distributed algorithm that relies on a \textit{quantized, finite-time coordination} protocol to aggregate the locally trained models. 
Furthermore, our algorithm allows for the use of \textit{stochastic gradients} during local training. Stochastic gradients are computed using a randomly sampled subset of the local training data, which makes the proposed algorithm more efficient and scalable than traditional gradient descent. 
In our paper, we analyze the performance of the proposed algorithm in terms of the mean distance from the online solution. Finally, we present numerical results for a logistic regression task.
\end{abstract}

%%%%%%%%%%%%%%%%%%%%%%%%%%%%%%%%%%%%%%%%%%%%%%%%%%%%%%%%%%%%%%%%%%%%%%%%%%%%%%%%
\section{Introduction}\label{sec:intro}

Recent technological advances have led to the widespread implementation of multi-agent systems in several applications, ranging from power grids to robotics, from traffic to sensor networks, to name a few \cite{molzahn_survey_2017,nedic_distributed_2018}. 
These systems, composed of interconnected agents equipped with computational and communication resources, enable the collection of data on many different phenomena at an unprecedented level of detail \cite{li_federated_2020}. 
Processing these data in order to train machine learning models has therefore become a research objective of central importance, around which \textit{federated learning} (FL) was born \cite{li_federated_2020,gafni_federated_2022}.

Differently from traditional machine learning approaches in which data are collected and processed at a single location, the goal of FL is to enable a cooperative learning process that does not require the agents to directly share private data. 
The usual architecture of federated learning set-ups includes a set of agents that store and locally process data, and a \textit{fusion center} that communicates with them and aggregates the results of their computations \cite{mcmahan_communication-efficient_2017}. However, in many applications it may not be possible (or desirable) to implement such an architecture. 
Indeed, the fusion center is a single point of failure, and (temporarily) removing it halts the learning process entirely. 
For this reason, researchers have recently focused on developing fully distributed solutions. 
In these solutions agents aggregate their results without needing a fusion center \cite{alghunaim2022unified}. 
This paper follows the same line of research and focuses on designing \textit{distributed} learning algorithms. 
In these algorithms agents employ peer-to-peer communications to share and aggregate the results of local processing. 
This set-up builds the foundation of this paper, and is commonly assumed in \textit{distributed optimization}, \cite{nedic_distributed_2018,yang_survey_2019}. 

Relying on peer-to-peer communications instead of a fusion center introduces many advantages for FL algorithms (e.g., enhances robustness) but poses bigger design challenges. 
Indeed, a fusion center enables dissemination of information (\textit{e.g.} local computations) to all agents in a single communication round. 
On the contrary, peer-to-peer communications require multiple communication rounds. 
Several classes of distributed algorithms have been proposed to overcome this challenge. 
In particular, distributed (sub-)gradient methods, gradient tracking methods, and primal-dual methods \cite{notarstefano_distributed_2019}. 
While the gradient-based approaches usually rely on average consensus as a coordination and aggregation technique, in this paper we explore the use of \textit{finite-time coordination} (FTC) \cite{2023:Rikos_Johan_IFAC} as a peer-to-peer substitute for the fusion center.
% \todo{MY ALGORITHM IS ALSO QUANTIZED AVERAGE CONSENSUS... MAYBE MENTION THIS?} \nicola{I think we can skip mentioning it for now}
%
In particular, the proposed algorithm employs FTC to aggregate local gradient descent steps (as computed by the agents on the data they store) through multiple rounds of communications. 
This approach is similar to Near-DGD \cite{berahas_balancing_2019} (which however employs average consensus).
% \todo{mention what is the difference of our FTC with the existing ones + why its important} \nicola{for Near-DGD this is done in the following when we mention that avg consensus is not robust to quantization; I added a comment below}

% \todo{A paragraph for Stochastic gradients and why they are important?}

Decentralized learning algorithms (either relying on a fusion center or peer-to-peer coordination among agents) face the major challenge of operating over networks with limited communication constraints. 
Specifically, in practice communication between agents is done via bandwidth limited channels. 
For example, the usage of wireless networks \cite{qian_distributed_2022}, or the training high dimensional models, would significantly increase the communication burden \cite{richtarik_3pc_2022}. 
Therefore, guaranteeing efficient communication among agents is one of the main challenges for FL algorithms \cite{gafni_federated_2022}. 
One of the approaches for guaranteeing efficient communication is via \textit{quantization}. 
The basic idea of quantization is to represent the numerical values used in the model's parameters using fewer bits than their full precision, while still preserving a reasonable level of accuracy. 
Addressing the constraint of limited communication by utilizing quantized communication among agents is built into our proposed algorithm.
Specifically, the finite-time coordination protocol only employs quantized communication. 
We remark that, differently from distributed algorithms that use average consensus-based aggregation \cite{pu_robust_2020} (as also Near-DGD does), the proposed algorithm is robust to quantized communications, reaching an approximate solution to the learning problem. 
This approximation depends on the utilized quantization level. 
Moreover, our algorithm can be deployed on \textit{directed networks}. 
This deployment usually requires specialized reformulation of gradient-based algorithms \cite{xi_add-opt_2018,xin_decentralized_2020}.

Combining the aforementioned characteristics, in our paper we present a distributed learning algorithm. 
During the operation of our algorithm agents coordinate by exchanging quantized messages via a finite-time coordination protocol.  
We also present a convergence analysis for the proposed algorithm which characterizes the effect of quantization on the accuracy of the learned model, providing a guide to tune the quantization level. 
Additionally, our convergence analysis accounts for the effect of \textit{stochastic gradients}. 
Stochastic gradients are inexact gradients computed by the agents using only a portion of the local data \cite{xin_decentralized_2020}. 
This approach reduces the computational burden and speeds up the training process. 
In this paper we also characterize how stochastic gradients degrade the accuracy of the proposed method.
Finally, our analysis in this paper also accounts for time-variability of the local costs. 
Time-variability of local costs refers to the fact that the agents' local cost functions associated with different training examples in a dataset can vary over time. 
Indeed, in many applications, as agents continuously collect new data, local cost functions may vary over time. 
Thus, the goal of every agent in the network becomes that of solving an \textit{online learning} problem \cite{dallanese_optimization_2020,simonetto_time-varying_2020,yuan_can_2020}.

% \nicola{
% To summarize, we offer the following contributions:
% \begin{enumerate}
%     \item We consider a distributed learning set-up in which, differently from federated learning, the agents do not rely on a central coordinator. Its role is instead replaced by a quantized finite-time coordination protocol. 
%     \item In this framework we consider a projected gradient-based algorithm, and analyze its convergence. In particular, our analysis illustrates the effect on the performance that quantized communications, stochastic gradient, and time-varying costs have.
% \end{enumerate}
% }

The main contributions of our paper are the following: 
\begin{enumerate}
    \item We present a distributed projected gradient-based learning algorithm. 
    Differently from federated learning, the agents do not rely on a central coordinator. Coordination is achieved by a quantized finite-time coordination protocol. 
    Our proposed algorithm employs quantized communication between agents and accounts for the effect of stochastic gradients (see Algorithm~\ref{alg:main-algorithm}). 
    \item We analyze our algorithm's convergence to the optimal solution. 
    Specifically, in our analysis we illustrate the effect on the performance that quantized communications, stochastic gradient, and time-varying costs have (see Proposition~\ref{pr:mean-convergence}). 
\end{enumerate}

\paragraph*{Outline}
In section~\ref{sec:prob_form} we formulate the problem and describe the set-up of interest. Section~\ref{sec:algorithm} presents and discussed the proposed algorithm, whose convergence is analyzed in section~\ref{sec:convergence}. Section~\ref{sec:simulations} presents numerical results for a logistic regression task.

\paragraph*{Notation}
We denote with $\mathbb{N}$ and  $\mathbb{R}$ the set of integer and real numbers, respectively.
For any $a \in \mathbb{R}$, the greatest integer less than or equal to $a$ is denoted $\lfloor a \rfloor$, while the smallest integer greater than or equal to $a$ is denoted as $\lceil a \rceil$.
We denote a directed graph (digraph) of $N$ agents by $\mathcal{G} = (\mathcal{V}, \mathcal{E})$, with $\mathcal{V} = \{ 1, \ldots, N \}$ and $(i,j) \in \mathcal{E}$ if $i$ can transmit information to $j$. We denote the in-neighbors of $i$ by $\mathcal{N}_i^- = \{ j \in \mathcal{V} \; | \; (i,j)\in \mathcal{E}\}$, and the out-neighbors by $\mathcal{N}_i^+ = \{ l \in \mathcal{V} \; | \; (l, i)\in \mathcal{E}\}$.
A directed path from $i$ to $j$ of length $t$ exists if we can find a sequence of agents $i \equiv j_0,j_1, \dots, j_t \equiv j$ such that $(j_{\tau+1},j_{\tau}) \in \mathcal{E}$ for $ \tau = 0, 1, \dots , t-1$.
A digraph is \textit{strongly connected} if there exists a directed path from every agent $i$ to every agent $j$ in the network, for every $i,j \in \mathcal{V}$.
The diameter $D$ of a digraph is the longest shortest path between any two agents $i, j \in \mathcal{V}$ in the network.

%%%%%%%%%%%%%%%%%%%%%%%%%%%%%%%%%%%%%%%%%%%%%%%%%%%%%%%%%%%%%%%%%%%%%%%%%%%%%%%%
\section{Problem Formulation}\label{sec:prob_form}

Given the digraph $\mathcal{G} = (\mathcal{V}, \mathcal{E})$ with $N$ agents, our goal is to solve the online optimization problem
\begin{equation}\label{eq:problem}
	x_k^* = \argmin_{x \in \R^n} \sum_{i = 1}^N f_{i,k}(x)
\end{equation}
where $f_{i,k} : \R^n \to \R$ are the time-varying local costs, each privately held by one of the agents. Since we are specifically interested in learning applications, in the following we consider local costs of the form 
\begin{equation}\label{eq:local-costs}
    f_{i,k}(x) = \frac{1}{m_i} \sum_{h = 1}^{m_i} \ell(x; d_{i,k}^h)
\end{equation}
where $\ell : \R^n \to \R$ is a loss function (\textit{e.g.} quadratic or logistic loss) and $\{ d_{i,k}^h \}_{h = 1}^{m_i}$ are the data points available to agent $i$ at time $k$ (\textit{e.g.} pairs of feature vector and label for a classification task). The problem we face then is time-varying as the local costs depend on time-varying data.

Let now $x_i \in \R^n$ be the local state of $i \in \mathcal{V}$, and let $\x = [x_1^\top, \ldots, x_N^\top]^\top \in \R^{n N}$. Then we can define the \textit{consensus set} as
$$
    \C = \{ \x \in \R^{n N} \ | \ x_i = x_j \ \forall i, j \in \mathcal{V} \},
$$
and we can rewrite~\eqref{eq:problem} as the following \textit{distributed online optimization} problem
\begin{equation}\label{eq:problem-consensus}
	\x_k^* = \argmin_{\x \in \R^{n N}} \sum_{i = 1}^N f_{i,k}(x_i) \quad \text{s.t.} \ \x \in \C. 
\end{equation} 
If the digraph is strongly connected (cf. Assumption~\ref{as:network} below), then it is clear that problems~\eqref{eq:problem} and~\eqref{eq:problem-consensus} are equivalent, and $\x_k^* = \1_N \otimes x_k^*$. 
Our goal in the following will be to design an online distributed algorithm that solves~\eqref{eq:problem-consensus} making use of a \textit{finite-time coordination routine}.

We introduce now some assumptions that will hold throughout our paper.

\begin{assumption}[Network]\label{as:network}
The digraph $\mathcal{G} = (\mathcal{V}, \mathcal{E})$ is strongly connected.
Every agent in the network knows the diameter (or an upper bound thereof) $D$ of the digraph.
\end{assumption}

\begin{assumption}[Costs]\label{as:costs}
The local cost $f_{i,k} : \R^n \to \R$ is $\lmin$-strongly convex and $\lmax$-smooth for each agent $i \in \mathcal{V}$ and time $k \in \N$.
\end{assumption}

\begin{assumption}[Bounded time-variability]\label{as:time-variability}
Problem~\eqref{eq:problem-consensus} is such that there exists $\sigma \geq 0$ such that
$$
    \norm{\x_k^* - \x_{k-1}^*} \leq \sigma, \quad \forall k \in \N.
$$
\end{assumption}

\smallskip

As mentioned above, Assumption~\ref{as:network} guarantees the equivalence of~\eqref{eq:problem} and~\eqref{eq:problem-consensus}, so that the former can be solved in a distributed fashion.
Moreover, assuming that the agents know an (upper bound to) the diameter $D$ of the digraph enables them to deploy the finite time coordination algorithm. We remark that it is possible to compute $D$ in a distributed fashion via max-consensus protocols, see \textit{e.g.} \cite{oliva_distributed_2016,deplano_distributed_2021}.

Assumptions~\ref{as:costs} and~\ref{as:time-variability} are fairly standard in online optimization, see \textit{e.g.} \cite{dallanese_optimization_2020,simonetto_time-varying_2020}, and guarantee existence and uniqueness of the optimal trajectory $\{ \x_k^* \}_{k \in \N}$ and a bounded distance between consecutive points in the trajectory, respectively.
Intuitively, Assumption~\ref{as:time-variability} implies that consecutive problems are ``similar'' so that computation performed at time $k$ can be repurposed towards an improved solution of the problem at $k+1$.

We remark that, due to the dynamic nature of the problem~\eqref{eq:problem-consensus}, convergence to the exact solution trajectory in general cannot be achieved \cite{dallanese_optimization_2020}. Rather, the proposed algorithm will converge to a neighborhood thereof, as characterized by the results presented in section~\ref{sec:convergence}.

%%%%%%%%%%%%%%%%%%%%%%%%%%%%%%%%%%%%%%%%%%%%%%%%%%%%%%%%%%%%%%%%%%%%%%%%%%%%%%%%
\section{Algorithm}\label{sec:algorithm}
Conceptually, one could solve problem~\eqref{eq:problem-consensus} by applying the \textit{online projected gradient} method \cite{dixit_online_2019} characterized by
\begin{equation}\label{eq:algorithm}
	\x_{k+1} = \proj_\C\left( \x_k - \alpha \nabla f_k(\x_k) \right), \quad k \in \N,
\end{equation}
where
$
	\nabla f_k(\x) = [\nabla f_{1,k}(x_1)^\top, \ldots, \nabla f_{N,k}(x_N)^\top]^\top
$
collects the local gradients, and the projection onto the consensus space is
$
	\proj_\C(\x) = \frac{1}{N} \sum_{i = 1}^N x_i.
$
However, in order for the algorithm we design to be distributed, we need to design a distributed implementation of the projection $\proj_\C$. The idea, following \cite{2023:Rikos_Johan_IFAC}, is to implement such projection using the finite-time coordination routine outlined in Algorithm~\ref{alg:finite-time-consensus}.
Before delving into the details, we outline in the following section two challenges to which the proposed algorithm is subject: \textit{(i)} \textit{limited communication}, and \textit{(ii)} \textit{inexact gradients}.

%%%%%%%%%%%%%%%%%%%%%%%%%%%%%%%%%%%%%%
\subsection{Challenges}\label{subsec:challenges}
Learning problems are often \textit{large-scale}, both due to the fact that the (local) costs depend on a huge number of data points, and because the model to be learned is high-dimensional. These features of the problem pose very stringent constraints on the design of solution algorithms, as discussed in the following.

\paragraph*{\textit{(i)} Limited communication}
Distributedly learning a high-dimensional model (such as deep neural networks) translates into the solution of problem~\eqref{eq:problem-consensus} with a large $n \gg 1$. But the agents need to perform consensus on the local gradient steps $\R^n \ni y_{i,k} = x_{i,k} - \alpha \nabla f_{i,k}(x_{i,k})$, demanding high communication rates.
To alleviate the communication cost, different schemes can be implemented, foremost of which is \textit{quantization} \cite{li_federated_2020,gafni_federated_2022}, which will be employed by the finite-time coordination protocol of Algorithm~\ref{alg:finite-time-consensus}. As a consequence, the agents compute only an approximate projection onto the consensus set.

\paragraph*{\textit{(ii)} Inexact gradients}
Besides the high-dimensionality of the learning problem~\eqref{eq:problem-consensus}, the local costs are also defined on high-dimensional data-sets \cite{gafni_federated_2022}, recalling~\eqref{eq:local-costs}:
$$
    f_{i,k}(x) = \frac{1}{m_i} \sum_{h = 1}^{m_i} \ell(x; d_{i,k}^h)
$$
where the number of data points available at any time $k$, $m_i$, can be very large.
Computing the exact local gradient $\nabla f_{i,k}(x)$ may have too high a computational cost, and in learning applications they are replaced by \textit{stochastic gradients}, as computed on a randomly selected sub-set of the local data \cite{li_convergence_2020}. In the following we suppose that agents compute only approximate gradients $\hat{\nabla} f_{i,k}(x)$, whose inexactness is characterized by the following assumption \cite{li_convergence_2020}.

\begin{assumption}[Stochastic gradients]\label{as:stochastic-gradients}
The agents compute the approximate local gradient $\hat{\nabla} f_{i,k}(x)$, which satisfies
$$
    \ev{\norm{\hat{\nabla} f_{i,k}(x) - \nabla f_{i,k}(x)}} \leq \tau.
$$
\end{assumption}

%%%%%%%%%%%%%%%%%%%%%%%%%%%%%%%%%%%%%%
\subsection{Algorithm}\label{subsec:algorithm}
Accounting for the challenges discussed above, we can finally characterize the distributed implementation of the online projected gradient~\eqref{eq:algorithm} with Algorithm~\ref{alg:main-algorithm}.

\begin{algorithm}[!ht]
\caption{FTQC-DGD}
\label{alg:main-algorithm}
\begin{algorithmic}[1]
	\Require For each agent $i \in \mathcal{V}$ initialize $x_{i,0}$; choose the step-size $\alpha$.
    % \AR{Output ??}
	\For{$k = 0, 1, \ldots$ each agent $i$}
    	\CommentState{local update}
        \State receive a new local cost $f_{i,k}$
        \State apply the local (possibly inexact) gradient step
        $$
            y_{i,k} = x_{i,k} - \alpha \hat{\nabla} f_{i,k}(x_{i,k})
        $$
        \CommentState{coordination}
        \State apply the finite-time coordination Algorithm~\ref{alg:finite-time-consensus} to approximately project onto the consensus space
        $$
            \x_{k+1} = \hat{\proj}_\C(\y_k)
        $$
        with $\y_k = [y_{1,k}^\top, \ldots, y_{N,k}^\top]^\top$
    \EndFor
\end{algorithmic}
\end{algorithm}

\smallskip

The finite-time coordination routine employed to approximate the projection onto the consensus set is described in Algorithm~\ref{alg:finite-time-consensus}. 
Note here that Algorithm~\ref{alg:finite-time-consensus} is executed until it converges to the quantized average of each agent's estimate. 
Therefore, by calculating the quantized average we are able to approximate the projection onto the consensus set.
For simplicity we report it for scalar states, while in practice the agents apply Algorithm~\ref{alg:finite-time-consensus} to each component of the vectors $\{ y_{i,k} \}_{i \in \mathcal{V}}$ in parallel.

\begin{algorithm}[!ht]
\caption{Finite-time quantized coordination (FTQC)}
\label{alg:finite-time-consensus}
\begin{algorithmic}[1]
	\Require The states to be averaged $\{ x_i \}_{i \in \mathcal{V}}$, quantization level $\Delta$, and upper bound to the diameter $D$.
    \Statex{\color{blue}// initialization}
    \State each agent $i \in \mathcal{V}$ sets $y_i = 2 \lfloor x_i / \Delta \rfloor$ and $z_i = 2$
	\For{$\ell = 1, 2, \ldots$ each agent $i$}
        \Statex\hspace{1.5em}{\color{blue}// local update and transmission}
        \While{$z_i > 1$}
            \State compute $t = \lfloor y_i / z_i \rfloor$, and update $y_i = y_i - t$ and $z_i = z_i - 1$
            \State select an out-neighbor $j \in \mathcal{N}_i^+$ uniformly at random
            \State transmit $t$ to $j$, which updates $y_j = y_j + t$ and $z_j = z_j + 1$
        \EndWhile
        \Statex\hspace{1.5em}{\color{blue}// vote on termination}
        \If{$\ell \mod D = 1$}
            \State compute $\bar{m}_i = \lceil y_i / z_i \rceil$ and $\ubar{m}_i = \lfloor y_i / z_i \rfloor$, and broadcast them to the out-neighbors $j \in \mathcal{N}_i^+$
            \State receive $\bar{m}_j$, $\ubar{m}_j$ from the in-neighbors $j \in \mathcal{N}_i^-$ and set
            $
                \bar{m}_i = \max_{j \in \mathcal{N}_j^- \cup \{ i \}} \bar{m}_j, \ \ubar{m}_i = \min_{j \in \mathcal{N}_j^- \cup \{ i \}} \ubar{m}_j
            $
            \If{$\bar{m}_i - \ubar{m}_i \leq 1$}
                \State set $x_i = \Delta \ubar{m}_i$ and terminate
            \EndIf
        \EndIf
    \EndFor
\end{algorithmic}
\end{algorithm}

\smallskip

\paragraph{Algorithm~\ref{alg:main-algorithm}}
First of all, we remark that by design Algorithm~\ref{alg:main-algorithm} can be interpreted as an inexact projected gradient descent method, and be written in the abstract form
\begin{equation}\label{eq:algorithm-inexact}
	\x_{k+1} = \proj_\C\left( \x_k - \alpha \nabla f_k(\x_k) \right) + \e_k.
\end{equation}
The (random) vector $\e_k$ accounts for all sources of inexactness discussed in section~\ref{subsec:challenges}, namely the quantization error introduced by Algorithm~\ref{alg:finite-time-consensus} and the error due to inexact gradients. This interpretation of Algorithm~\ref{alg:main-algorithm} will prove to be particularly suited to studying its convergence in section~\ref{sec:convergence}.
We further remark that -- following the literature on \textit{online learning} \cite{shalev-shwartz_online_2011} -- the proposed algorithm operates in a \textit{predictive} fashion. Indeed, the local costs $\{ f_{i,k} \}_{k \in \N}$ are used to \textit{predict the solution to the problem at time $k+1$} \cite[p.~73]{dallanese_optimization_2020}.

\paragraph{Algorithm~\ref{alg:finite-time-consensus}}
It is important to notice that the finite-time coordination routine of Algorithm~\ref{alg:finite-time-consensus} is designed to be deployed over quantized communications -- indeed the agents only share integers. As a consequence, the accuracy of the projection onto $\mathcal{C}$ that it returns only depends on the quantization level $\Delta$.

Notice that the agents perform a vote on the termination of the routine every $D$ iterations, during which the agents perform a max- and a min-consensus to assess if their states have converged on an integer, thus signifying that the quantized consensus has been reached.
Indeed, Algorithm~\ref{alg:finite-time-consensus} enables the agents to compute the asymmetrically quantized average of their initial states, that is
$$
    \Bigl \lfloor \frac{\bar{x}}{\Delta} \Bigr \rfloor \Delta, \quad \bar{x} := \frac{1}{N} \sum_{i = 1}^N x_i.
$$
This is achieved with high probability, as proved in \cite{2023:Rikos_Johan_IFAC, 2022:Rikos_Johan_Split_Autom} by showing that Algorithm~\ref{alg:finite-time-consensus} can be modeled as a random walk of multiple tokes around the digraph.

\begin{lemma}[Coordination error]\label{lem:ftc-error}
Let $\x_{k+1} = \1_N \otimes x_{k+1}$, be the output of Algorithm~\ref{alg:finite-time-consensus} as applied in Algorithm~\ref{alg:main-algorithm} (line 5) to distributedly approximate $\proj_\mathcal{C}$.
Then it holds that
$$
    \norm{\proj_\mathcal{C}(\y_k) - \x_{k+1}} \leq 2 \Delta \sqrt{n N} =: \gamma.
$$
\end{lemma}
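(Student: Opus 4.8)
The plan is to exploit the explicit characterization of the output of Algorithm~\ref{alg:finite-time-consensus} recalled just before the statement: the routine is guaranteed (on the high-probability event established in the cited works) to deliver the asymmetrically quantized average of its inputs, applied componentwise to the stacked vector $\y_k$. First I would make the two quantities appearing in the bound explicit. Since $\proj_\mathcal{C}$ maps onto the consensus set, the exact projection is $\proj_\mathcal{C}(\y_k) = \1_N \otimes \bar{y}_k$, where $\bar{y}_k = \frac{1}{N} \sum_{i=1}^N y_{i,k} \in \R^n$ is the componentwise average; and by construction the algorithm returns $\x_{k+1} = \1_N \otimes x_{k+1}$, each coordinate of $x_{k+1}$ being the quantized average produced by the coordination routine. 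Because both vectors carry the same Kronecker structure $\1_N \otimes (\cdot)$, the error collapses to
\[
    \norm{\proj_\mathcal{C}(\y_k) - \x_{k+1}} = \sqrt{N}\,\norm{\bar{y}_k - x_{k+1}},
\]
so it suffices to bound the per-coordinate gap between the true average and its quantized counterpart.

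The core estimate is a scalar one. Fixing a coordinate and letting $a_1, \ldots, a_N$ denote the corresponding entries of $y_{1,k}, \ldots, y_{N,k}$, the routine returns $\Delta \lfloor \frac{1}{N} \sum_i \lfloor a_i / \Delta \rfloor \rfloor$, whereas the target is $\frac{1}{N} \sum_i a_i$. I would split the discrepancy into two contributions, each controlled by the elementary inequality $0 \le a - \Delta \lfloor a / \Delta \rfloor < \Delta$. The first contribution, replacing each $a_i$ by its floor $\Delta \lfloor a_i / \Delta \rfloor$ inside the average, is an average of $N$ terms each strictly below $\Delta$, hence is itself strictly below $\Delta$. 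The second contribution, the outer flooring of the already-quantized average onto the grid $\Delta \N$, adds at most another $\Delta$. Summing the two, the per-coordinate error is strictly below $2\Delta$.

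Finally I would assemble the vector bound: squaring the per-coordinate estimate and summing over the $n$ coordinates gives $\norm{\bar{y}_k - x_{k+1}} \le 2\Delta \sqrt{n}$, and multiplying by the factor $\sqrt{N}$ from the Kronecker identity yields $\norm{\proj_\mathcal{C}(\y_k) - \x_{k+1}} \le 2\Delta \sqrt{nN} = \gamma$, as claimed. The main obstacle I anticipate is bookkeeping the two distinct quantization steps correctly---the per-agent rounding $y_i = 2 \lfloor x_i / \Delta \rfloor$ in the initialization and the terminal rounding $x_i = \Delta \ubar{m}_i$ at convergence of Algorithm~\ref{alg:finite-time-consensus}---since it is precisely their accumulation that yields the factor $2$ in the bound rather than a single $\Delta$. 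Everything else is a routine propagation of the scalar estimate through the Kronecker product, once one conditions on the event that the coordination routine has reached quantized consensus.
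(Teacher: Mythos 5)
Your proposal is correct and follows essentially the same route as the paper: reduce to a per-coordinate discrepancy of at most $2\Delta$ between the true average and the quantized consensus value, then sum squares over the $nN$ entries and take the square root. The only difference is that the paper simply asserts the $2\Delta$ per-component bound, whereas you explicitly trace it to the two quantization steps (the initial per-agent rounding and the terminal flooring), which is a welcome elaboration rather than a different argument.
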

\begin{proof}
By definition $\proj_\mathcal{C}(\y_k) = \frac{1}{N} \sum_{i = 1}^N y_{i,k}$ and, since Algorithm~\ref{alg:finite-time-consensus} converges to the asymmetrically quantized average consensus, we can write
\begin{align*}
    \norm{\proj_\mathcal{C}(\y_k) - \x_{k+1}}^2 &= \sum_{i = 1}^N \norm{\frac{1}{N} \sum_{i = 1}^N y_{i,k} - x_{k+1}}^2 \\
    &\leq \sum_{i = 1}^N n (2 \Delta)^2 = n N (2\Delta)^2
\end{align*}
where the inequality holds the fact that each of the $n$ components of the vector $\frac{1}{N} \sum_{i = 1}^N y_{i,k} - x_{k+1}$ are at most $2 \Delta$ apart. The thesis follows by taking the square root.
\end{proof}

%%%%%%%%%%%%%%%%%%%%%%%%%%%%%%%%%%%%%%
\subsection{Alternative $\proj_\C$ implementations}\label{subsec:alternatives}
In this work we have resorted to a finite-time coordination routine to (approximately) implement the projection $\proj_\C$. However, different approaches have been explored in the literature on distributed optimization and federated learning, and it is instructive to briefly discuss them.

\paragraph{Average consensus}
Assuming that the graph is undirected\footnote{If this is not the case, a \textit{push-sum} or \textit{ratio} consensus algorithm can be employed instead \cite{tsianos_push-sum_2012}.}, and letting $\mathbold{W}$ be a doubly stochastic matrix for the network, one may choose to approximate $\proj_\C$ with a number $t \in \N$ of consensus steps for each time $k$, yielding the \textit{online version of Near-DGD} \cite{berahas_balancing_2019}
\begin{equation}\label{eq:near-dgd}
    \x_{k+1} = \mathbold{W}^t \left( \x_k - \alpha \nabla f_k(\x_k) \right), \quad k \in \N.
\end{equation}
As characterized in \cite[Lemma~V.2]{berahas_balancing_2019}, $\mathbold{W}^t$ is indeed an inexact projection onto $\C$, whose error decreases as $t$ increases (as a consequence of the well known convergence of the average consensus).

\paragraph{Gradient tracking}
Rather than (approximately) projecting local gradient steps onto the consensus set, the extensively studied approach of \textit{gradient tracking} can be used instead, see \textit{e.g.} \cite{xin_decentralized_2020}. This paradigm performs a projection onto $\C$ asymptotically, by introducing additional variables and using average consensus.
In the context of this paper, it is interesting to notice that \textit{online versions of gradient tracking methods may actually be less performant than ``DGD-style'' methods} such as~\eqref{eq:algorithm} and~\eqref{eq:near-dgd}, see \cite{yuan_can_2020,bastianello_distributed_2021}.
% \AR{SHOULD WE EXPLAIN HERE WHY THIS HOLDS?} \nicola{I think it is a bit off-topic, but we can add some explanation if we have space left}

\paragraph{Centralized aggregation}
In \textit{federated learning} a widely used assumption is that a central server is available, with which all agents can communicate, and which has the task of aggregating the local updates (\textit{e.g.} via weighted averaging as in FedAvg \cite{mcmahan_communication-efficient_2017}). When all the agents provide a local gradient at time $k$, the server can then perform an exact projection onto $\C$. However, this approach relies on the central server, which becomes a single point of failure. In this paper we are interested in fully distributed algorithms that rely on peer-to-peer communications.

%%%%%%%%%%%%%%%%%%%%%%%%%%%%%%%%%%%%%%%%%%%%%%%%%%%%%%%%%%%%%%%%%%%%%%%%%%%%%%%%
\section{Convergence Analysis}\label{sec:convergence}
In section~\ref{subsec:challenges} we introduced the abstract characterization of Algorithm~\ref{alg:main-algorithm} as the inexact online projected gradient~\eqref{eq:algorithm-inexact}. It is straightforward to see that, by adding and subtracting the right vectors, we can indeed write~\eqref{eq:algorithm-inexact} with $\e_k = \e_k^p + \e_k^g$ where
\begin{align*}
    \e_k^p &:= \hat{\proj}_\C(\x_k - \alpha \hat{\nabla} f_k(\x_k)) - \proj_\C(\x_k - \alpha \hat{\nabla} f_k(\x_k)) \\
    \e_k^g &:= \proj_\C(\x_k - \alpha \hat{\nabla} f_k(\x_k)) - \proj_\C(\x_k - \alpha \nabla f_k(\x_k))
\end{align*}
with $\e_k^p$ modeling the inexact projection onto the consensus set $\C$ due to the use of the FTQC scheme in Algorithm~\ref{alg:finite-time-consensus}, and $\e_k^g$ the use of approximate local gradients. The following result analyzes the convergence of~\eqref{eq:algorithm-inexact} in mean, accounting for the effects of time-variability of the costs and of $\e_k$.

\begin{proposition}[Convergence in mean]\label{pr:mean-convergence}
Consider the online distributed problem~\eqref{eq:problem-consensus}, and suppose that Assumptions~\ref{as:network}--\ref{as:stochastic-gradients} hold. Let $\{ \x_k \}_{k \in \N}$ be the trajectory generated by Algorithm~\ref{alg:main-algorithm}, then it holds that for all $k > 0$:
$$
    \ev{\norm{\x_k - \x_k^*}} \leq \zeta^k \norm{\x_0 - \x_0^*} + \left( \sigma + \gamma + \alpha \tau \right) \frac{1 - \zeta^{k+1}}{1 - \zeta}
$$
where $\zeta = \max\{ |1 - \alpha \lmin|, |1 - \alpha \lmax| \} \in (0, 1)$, and $\sigma$, $\gamma$, $\tau$ are defined in Assumption~\ref{as:time-variability}, Lemma~\ref{lem:ftc-error}, and Assumption~\ref{as:stochastic-gradients}, respectively.
\end{proposition}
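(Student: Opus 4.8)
The plan is to treat the recursion~\eqref{eq:algorithm-inexact} as a perturbed fixed-point iteration of the exact online projected-gradient operator
$$
    T_k(\x) := \proj_\C\bigl( \x - \alpha \nabla f_k(\x) \bigr),
$$
so that Algorithm~\ref{alg:main-algorithm} reads $\x_{k+1} = T_k(\x_k) + \e_k$. The core of the argument is to show that $T_k$ is a contraction whose fixed point is exactly the instantaneous optimizer $\x_k^*$, and then to propagate the one-step error through the iterations while absorbing the drift of the optimizer.

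First I would establish that $T_k$ is $\zeta$-Lipschitz. Since $\nabla f_k$ acts block-wise and each $f_{i,k}$ is $\lmin$-strongly convex and $\lmax$-smooth (Assumption~\ref{as:costs}), the gradient-descent map $\x \mapsto \x - \alpha \nabla f_k(\x)$ contracts by the factor $\zeta = \max\{ |1 - \alpha\lmin|, |1 - \alpha\lmax| \}$, which lies in $(0,1)$ for $\alpha$ small enough; composing with the orthogonal (hence non-expansive) projection $\proj_\C$ preserves this rate. I would also verify that $\x_k^*$ is the fixed point of $T_k$: writing $\x_k^* = \1_N \otimes x_k^*$ and using the optimality condition $\sum_i \nabla f_{i,k}(x_k^*) = 0$ together with $\proj_\C(\x) = \1_N \otimes \frac1N\sum_i x_i$ gives $T_k(\x_k^*) = \x_k^*$ directly.

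Next I would bound the perturbation in expectation. Splitting $\e_k = \e_k^p + \e_k^g$ as in section~\ref{sec:convergence}, Lemma~\ref{lem:ftc-error} controls the coordination/quantization part deterministically, $\norm{\e_k^p} \leq \gamma$, while non-expansiveness of $\proj_\C$ reduces the gradient part to $\norm{\e_k^g} \leq \alpha \norm{\hat\nabla f_k(\x_k) - \nabla f_k(\x_k)}$, whose conditional expectation is at most $\alpha\tau$ by Assumption~\ref{as:stochastic-gradients}. Conditioning on the history up to step $k$ and invoking the tower property then yields $\ev{\norm{\e_k}} \leq \gamma + \alpha\tau$. Combining the contraction with the displacement of the optimizer, I would add and subtract $\x_k^* = T_k(\x_k^*)$ and apply the triangle inequality:
$$
    \norm{\x_{k+1} - \x_{k+1}^*} \leq \zeta \norm{\x_k - \x_k^*} + \norm{\e_k} + \norm{\x_{k+1}^* - \x_k^*},
$$
where the last term is bounded by $\sigma$ via Assumption~\ref{as:time-variability}. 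Taking expectations gives the scalar recursion $a_{k+1} \leq \zeta a_k + (\sigma + \gamma + \alpha\tau)$ with $a_k := \ev{\norm{\x_k - \x_k^*}}$, which I would unroll as a geometric series to obtain the stated bound.

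The main obstacle I anticipate is the careful bookkeeping of the expectation: because $\e_k^g$ depends on the random iterate $\x_k$ (itself a function of past stochastic gradients and random coordination outcomes), the bound on $\ev{\norm{\e_k^g}}$ must be obtained through a conditional-expectation argument rather than applied pointwise, and one must confirm that Assumption~\ref{as:stochastic-gradients} indeed controls the stacked gradient error uniformly in the current state. The contraction step itself is standard, and the precise exponent in the geometric factor is merely a matter of how the initial deterministic term $a_0$ enters the summation.
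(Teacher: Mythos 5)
Your proposal is correct and follows essentially the same route as the paper's proof: the same split $\e_k = \e_k^p + \e_k^g$, the same contraction argument via strong convexity/smoothness composed with the non-expansive projection, the same fixed-point identity $\x_k^* = \proj_\C(\x_k^* - \alpha\nabla f_k(\x_k^*))$, the same per-term bounds $\sigma$, $\gamma$, $\alpha\tau$, and the same geometric-series unrolling. Your extra care with the tower property for $\ev{\norm{\e_k^g}}$ is a minor tightening of a step the paper takes for granted, not a different argument.
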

\begin{proof}
It is straightforward to see that we can write~\eqref{eq:algorithm-inexact}
\begin{align*}
    \x_{k+1} - \x_{k+1}^* &= \proj_\C\left( \x_k - \alpha \nabla f_k(\x_k) \right) - \x_k^* + \\ &+ \left( \x_k^* - \x_{k+1}^* + \e_k^p + \e_k^g \right)
\end{align*}
where we introduced an additional error ($\x_k^* - \x_{k+1}^*$) due to the fact that the algorithm is predictive and approximates $\x_{k+1}^*$ using the costs at time $k$. Taking the norm, redefining $\e_k = \x_k^* - \x_{k+1}^* + \e_k^p + \e_k^g$, and using the triangle inequality we write
\begin{align}
    &\norm{\x_{k+1} - \x_{k+1}^*} \leq \norm{\proj_\C\left( \x_k - \alpha \nabla f_k(\x_k) \right) - \x_k^*} + \norm{\e_k} \nonumber \\
    &\qquad \overset{(i)}{\leq} \zeta \norm{\x_k - \x_k^*} + \norm{\e_k}, \label{eq:tracking-error}
\end{align}
with $\zeta = \max\{ |1 - \alpha \lmin|, |1 - \alpha \lmax| \} \in (0, 1)$, where (i) follows since by Assumption~\ref{as:costs} problem~\eqref{eq:problem-consensus} is $\lmin$-strongly convex and $\lmax$-smooth \cite{taylor_exact_2018} and the fact that $\x_k^* = \proj_\C(\x_k^* - \alpha \nabla f_k(\x_k^*))$. Taking the expected value we have $\ev{\norm{\x_{k+1} - \x_{k+1}^*}} \leq \zeta \ev{\norm{\x_k - \x_k^*}} + \ev{\norm{\e_k}}$; we bound now the three terms in $\ev{\norm{\e_k}}$ separately.

By Assumption~\ref{as:time-variability} we know that $\norm{\x_k^* - \x_{k+1}^*} \leq \sigma$, and by Lemma~\ref{lem:ftc-error} we know that $\ev{\norm{\e_k^p}} \leq \gamma$. Finally, by Assumption~\ref{as:stochastic-gradients} we get
\begin{align*}
    \ev{\e_k^g} &\overset{(i)}{\leq} \ev{\norm{\x_k - \alpha \hat{\nabla} f_k(\x_k) - (\x_k - \alpha \nabla f_k(\x_k))}} = \\
    &= \alpha \ev{\norm{\hat{\nabla} f_k(\x_k) - \nabla f_k(\x_k)}} \leq \alpha \tau
\end{align*}
where (i) holds since the projection is $1$-Lipschitz continuous \cite[Proposition 4.16]{bauschke_convex_2017}. Iterating~\eqref{eq:tracking-error} and using the geometric series yields the thesis.
\end{proof}

\smallskip

Notice that taking the limit for $k \to \infty$ from Proposition~\ref{pr:mean-convergence} we get
\begin{equation}\label{eq:asymptotic-error}
    \lim_{k \to \infty} \ev{\norm{\x_k - \x_k^*}} \leq \frac{\sigma + \gamma + \alpha \tau}{1 - \zeta}.
\end{equation}
We can then see how the different sources of ``inexactness'' -- time-variability of the costs, inexact coordination due to the use of Algorithm~\ref{alg:finite-time-consensus}, approximated gradients -- play a role in the bound for the asymptotic tracking error $\lim_{k \to \infty} \ev{\norm{\x_k - \x_k^*}}$.
We remark in particular that the bound $\tau$ is multiplied by the step-size $\alpha$, which implies that the smaller we choose it, the less the effect of inexact gradients; however, the smaller $\alpha$ is the larger $\zeta$, implying a trade-off\footnote{One option to choose the step-size then is to find the value of $\alpha$ that minimizes the asymptotic error bound given in \eqref{eq:asymptotic-error}.}.

\begin{remark}
We remark that the results can be easily extended to allow for a different bound $\norm{\x_k^* - \x_{k-1}^*} \leq \sigma_k$ at each time $k \in \N$.
\end{remark}

%%%%%%%%%%%%%%%%%%%%%%%%%%%%%%%%%%%%%%%%%%%%%%%%%%%%%%%%%%%%%%%%%%%%%%%%%%%%%%%%
\section{Numerical Results}\label{sec:simulations}

% \AR{Kalle comments after Feb 20
% there is a simple classification in Xinlei PHD Intro
% 2D and separate 2 teams (there is a function that separates)
% different data come to different nodes
% We could motivate quantization etc in this example 
% in concrete terms what is limited communication? (is artificial? or depends on environment etc)} 

% \AR{https://www.diva-portal.org/smash/get/diva2:1470018/FULLTEXT01.pdf}

In this section we consider the \textit{(online) logistic regression} problem characterized by problem~\eqref{eq:problem} with the local cost functions
\begin{equation}\label{eq:logistic-cost}
    f_{i,k}(x) = \sum_{h = 1}^{m_i} \log\left( 1 + \exp\left( - b_{i,k}^h a_{i,k}^h x \right) \right) + \frac{\epsilon}{2} \norm{x}^2,
\end{equation}
with agent $i \in \mathcal{V}$ at time $k \in \N$ storing the private data $\{ (a_{i,k}^h, b_{i,k}^h) \}_{h = 1}^{m_i}$, with $a_{i,k}^h \in \R^{1 \times n}$ and $b_{i,k}^h \in \{ -1, 1 \}$, and $\epsilon = 5$. The network is composed of $N = 10$ agents, the number of unknowns (including the intercept) is $n = 16$, and each agent stores $m_i = 20$ data points.

Following the discussion of section~\ref{subsec:alternatives}, we will compare FTQC-DGD with Near-DGD \cite{berahas_balancing_2019} and the distributed gradient tracking (DGT) method \cite{yuan_can_2020}. All three methods use quantized communications, and are given the same communication budget to ensure a fair comparison\footnote{In practice, this means that Near-DGD and DGT perform the same number of communication rounds as required in FTQC-DGD. For DGT these communications are split among the two steps characterizing the algorithm.}.

%%%%%%%%%%%%%%%%%%%%%%%%%%%%%%%%%%%%%%
\subsection{Quantization}\label{subsec:numerical-quantization}
We start evaluating the performance of the three algorithms by applying them to a static logistic regression problem (the data in~\eqref{eq:logistic-cost} do not change) and for different quantization levels $\Delta$.

First of all, in Figure~\ref{fig:quantization} we report the error trajectory $\{ \norm{\x_k - \x^*} \}_{k \in \N}$ generated by the three algorithms when the communications are quantized with $\Delta = 0.01$. As previously observed in \textit{e.g.} \cite{pu_robust_2020}, gradient tracking methods are not robust to quantization, which leads DGT to diverge.
FTQC-DGD and Near-DGD both converge to a neighborhood of the optimal solution, in accordance with the results of section~\ref{sec:convergence}, with the neighborhood reached by the former being smaller.

\begin{figure}[!ht]
    \centering
    \includegraphics[scale=0.6]{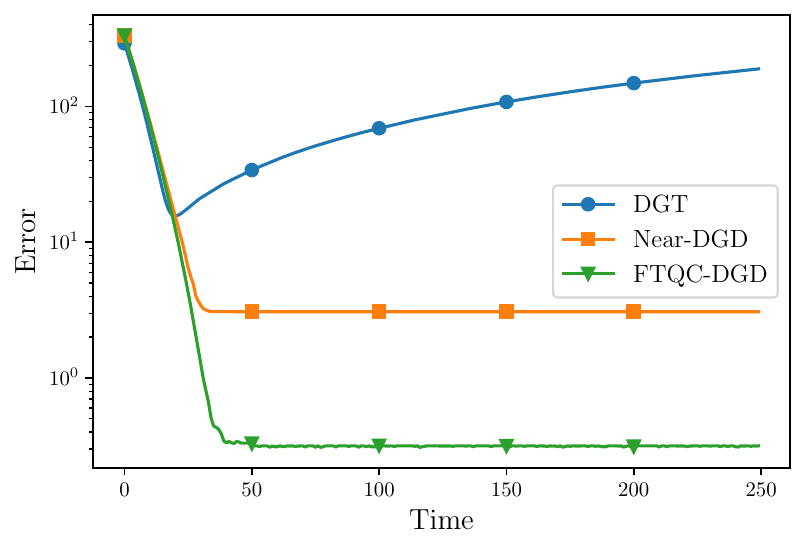}
    \caption{Comparison of the proposed FTQC-DGD with Near-DGD and DGT, using quantized communications with $\Delta = 0.01$.}
    \label{fig:quantization}
\end{figure}

In Table~\ref{tab:quantization} we report the asymptotic error\footnote{Computed as the maximum error in the last $4/5$ of the simulation.} for Near-DGD and FTQC-DGD, varying the quantization level $\Delta$. As we can see, the proposed algorithm consistently achieves smaller errors, with the error being one order of magnitude smaller for values $\Delta \leq 0.1$.

\begin{table}[!ht]
\begin{center}
\caption{Asymptotic error achieved by Near-DGD and FTQC-DGD for different quantization levels.}
\label{tab:quantization}
\begin{tabular}{ccc}
    Quantization            & Near-DGD                  & FTQC-DGD                  \\
    \hline
    $\Delta = 10^{-5}$      & $4.49 \times 10^{-3}$     & $3.79 \times 10^{-4}$     \\
    $\Delta = 10^{-4}$      & $3.51 \times 10^{-2}$     & $3.65 \times 10^{-3}$     \\
    $\Delta = 10^{-3}$      & $2.18 \times 10^{-1}$     & $3.43 \times 10^{-2}$     \\
    $\Delta = 10^{-2}$      & $3.07$                    & $3.16 \times 10^{-1}$     \\
    $\Delta = 10^{-1}$      & $41.91$                   & $3.72$                    \\
    $\Delta = 1$            & $236.58$                  & $124.20$                  \\
    $\Delta = 10$           & $566.37$                  & $414.18$                  \\
    \hline
\end{tabular}
\end{center}
\end{table}

%%%%%%%%%%%%%%%%%%%%%%%%%%%%%%%%%%%%%%
\subsection{Stochastic gradients}
In this section we apply the proposed algorithm to the static logistic regression problem, with quantization level fixed to $\Delta = 10^{-4}$, and are interested in evaluating its performance when the agents can only access inexact local gradient. As discussed in section~\ref{subsec:challenges}, when the local cost~\eqref{eq:logistic-cost} depends on a large number of data points ($m_i \gg 1$), then evaluating a local gradient may be too computationally expensive. Instead, an agent $i$ each time selects a subset of its data, $\mathcal{H} \subset \{ 1, \ldots, m_i \}$, uniformly at random, with $b := |\mathcal{H}| < m_i$ being the batch size. The agent can then approximately evaluate the gradient by computing \cite{xin_decentralized_2020}
$
    \hat{\nabla} f_{i,k}(x) = \sum_{h \in \mathcal{H}} \nabla \log\left( 1 + \exp\left( - b_{i,k}^h a_{i,k}^h x \right) \right) + \epsilon x.
$

In Table~\ref{tab:batch-size} we report the asymptotic error attained by the proposed algorithm for different batch sizes. Clearly, the smaller the batch size the worse $\hat{\nabla} f_{i,k}(x)$ approximates the true gradient, compounding the quantization error with a second source of inexactness.

\begin{table}[!ht]
\begin{center}
\caption{Asymptotic error achieved by FTQC-DGD for different batch sizes.}
\label{tab:batch-size}
\begin{tabular}{cc}
    Batch size              & As. error                 \\
    \hline
    $b = m_i / 2$           & $2.37$                    \\
    $b = 11$                & $2.05$                    \\
    $b = 12$                & $1.82$                    \\
    $b = 13$                & $1.68$                    \\
    $b = 14$                & $1.38$                    \\
    $b = 15$                & $1.19$                    \\
    $b = 16$                & $9.78 \times 10^{-1}$     \\
    $b = 17$                & $7.51 \times 10^{-1}$     \\
    $b = 18$                & $6.05 \times 10^{-1}$     \\
    $b = 19$                & $3.66 \times 10^{-1}$     \\
    $b = m_i$               & $3.45 \times 10^{-3}$     \\
    \hline
\end{tabular}
\end{center}
\end{table}

%%%%%%%%%%%%%%%%%%%%%%%%%%%%%%%%%%%%%%
\subsection{Online optimization}
We conclude this section by evaluating the performance of the proposed algorithm when applied to an online logistic regression problem, in which the local cost functions change every $100$ iterations.

In Figure~\ref{fig:online} we report the tracking error $\{ \norm{\x_k - \x_k^*} \}_{k \in \N}$ of FTQC-DGD for different values of the quantization level $\Delta$. As we can see, while the problem does not change, the algorithm converges to a neighborhood of the optimal solution, whose radius depends on the quantization error (as discussed in section~\ref{subsec:numerical-quantization}). Each time the problem changes, the tracking error spikes, due to the fact that the new optimal solution is different from the previous one and the algorithm undergoes a new transient. As characterized in the results of section~\ref{sec:convergence}, the upper bound to the tracking error indeed depends on the maximum distance between consecutive optima.

\begin{figure}[!ht]
    \centering
    \includegraphics[scale=0.6]{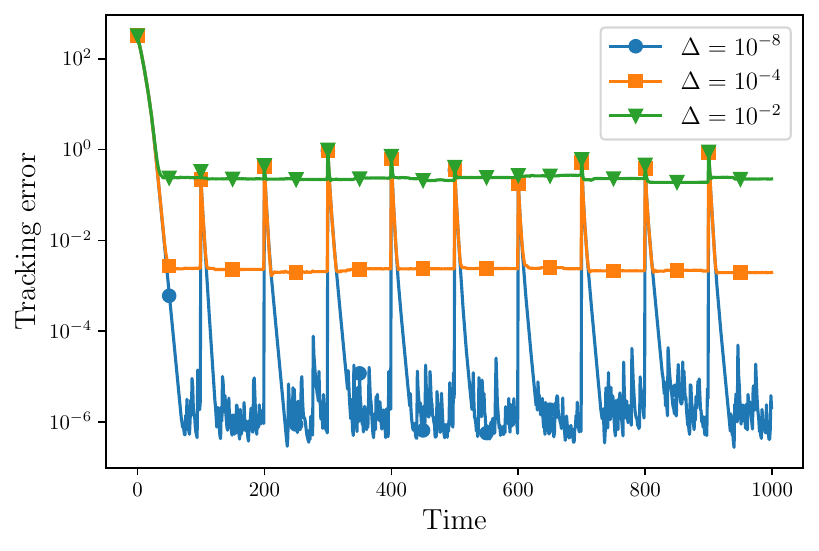}
    \caption{Tracking error of FTQC-DGD applied to an online logistic regression problem, with different quantization levels.}
    \label{fig:online}
\end{figure}

%%%%%%%%%%%%%%%%%%%%%%%%%%%%%%%%%%%%%%%%%%%%%%%%%%%%%%%%%%%%%%%%%%%%%%%%%%%%%%%%
\section{Conclusions and Future Directions}\label{sec:conclusions}
In this paper we proposed a novel gradient-based algorithm for distributed learning over fully decentralized, directed networks. In particular, the algorithm substitutes a fusion center (employed in federated learning) with a finite-time coordination protocol. We analyzed the performance of the algorithm accounting for the use of quantized communications and stochastic gradients, as well as the impact of time-varying costs.
Future work will look into comparing the use of finite-time coordination with different quantization schemes, and applying the proposed approach to more general problems such as composite and constrained optimization problems.

% \addtolength{\textheight}{-12cm}   % This command serves to balance the column lengths
                                  % on the last page of the document manually. It shortens
                                  % the textheight of the last page by a suitable amount.
                                  % This command does not take effect until the next page
                                  % so it should come on the page before the last. Make
                                  % sure that you do not shorten the textheight too much.

%%%%%%%%%%%%%%%%%%%%%%%%%%%%%%%%%%%%%%%%%%%%%%%%%%%%%%%%%%%%%%%%%%%%%%%%%%%%%%%%

%%%%%%%%%%%%%%%%%%%%%%%%%%%%%%%%%%%%%%%%%%%%%%%%%%%%%%%%%%%%%%%%%%%%%%%%%%%%%%%%

\bibliographystyle{IEEEtran}
\bibliography{references}

\end{document}